\newtheorem{theorem}{Theorem}[section]
\newtheorem{definition}{Definition}[section]
\newcommand{\acro}[1]{\textsc{\MakeLowercase{#1}}}  
\DeclareMathOperator*{\argmax}{arg\,max}  
\DeclareMathOperator*{\argmin}{arg\,min}  
\Crefname{section}{Sect.}{Sects.}
\Crefname{equation}{Eq.}{Eqs.}
\Crefname{figure}{Fig.}{Figs.}
\Crefname{tabular}{Tab.}{Tabs.}
\Crefname{table}{Tab.}{Tabs.}
\Crefname{appendix}{Appx.}{Appxs.}
\Crefname{algorithm}{Alg.}{Algs.}
\Crefname{observation}{Obs.}{Obs.}
\pgfplotsset{compat=1.16}
\newcommand\hl{\bgroup\markoverwith
  {\textcolor{yellow}{\rule[-.5ex]{2pt}{2.5ex}}}\ULon}
\newcolumntype{x}[1]{%
>{\centering\hspace{0pt}}p{#1}}%
\title{Local Bayesian optimization via maximizing probability of descent}
\author{
Quan Nguyen$^{*1}$ \quad Kaiwen Wu$^{*2}$ \quad Jacob R.\ Gardner$^2$ \quad Roman Garnett$^1$ \\
$^1$Washington University in St.\ Louis \quad $^2$University of Pennsylvania \\
\texttt{\{quan,garnett\}@wustl.edu} \\
\texttt{\{kaiwenwu,jacobrg\}@seas.upenn.edu}
}
\def\blfootnote{\xdef\@thefnmark{}\@footnotetext}
\begin{document}

\maketitle

\begin{abstract}
\looseness=-1
Local optimization presents a promising approach to expensive, high-dimensional black-box optimization by sidestepping the need to globally explore the search space. For objective functions whose gradient cannot be evaluated directly, Bayesian optimization offers one solution -- we construct a probabilistic model of the objective, design a policy to learn about the gradient at the current location, and use the resulting information to navigate the objective landscape. Previous work has realized this scheme by minimizing the variance in the estimate of the gradient, then moving in the direction of the expected gradient. In this paper, we re-examine and refine this approach. We demonstrate that, surprisingly, the expected value of the gradient is not always the direction maximizing the probability of descent, and in fact, these directions may be nearly orthogonal. This observation then inspires an elegant optimization scheme seeking to maximize the probability of descent while moving in the direction of most-probable descent. Experiments on both synthetic and real-world objectives show that our method outperforms previous realizations of this optimization scheme and is competitive against other, significantly more complicated baselines.
\end{abstract}

\section{Introduction}
The optimization of expensive-to-evaluate, high-dimensional black-box functions is ubiquitous in machine learning, science, engineering, and beyond; examples range from hyperparameter tuning \cite{snoek2012practical} and policy search in reinforcement learning \cite{calandra2016bayesian,gopalen2015thompson}, to configuring physics simulations \cite{maddox2021optimizing}.
High-dimensional global optimization faces an inherent difficulty stemming from the curse of dimensionality, as a thorough exploration of the search space becomes exponentially more expensive.
It is more feasible to seek to \emph{locally} optimize these high-dimensional objective functions, as we can then sidestep this inherent burden.
This is true even in settings where we cannot directly observe the gradient of the objective function, as we may appeal to sophisticated techniques such as Bayesian optimization to nonetheless learn about the gradient of the objective through noisy observations, and then use this knowledge to navigate the high-dimensional search space locally.
\blfootnote{$^*$Equal contribution.}

\looseness=-1
A realization of this scheme has been proposed by \citet{muller2021local}, where a Gaussian process (\acro{GP}) is used to model the objective function, and observations are designed to alternate between minimizing the variance -- and thus uncertainty -- of the \acro{GP}'s estimate of the gradient of the objective at a given location, then moving in the direction of the expected gradient.
Although this approach seems natural, it fails to account for some nuances in the distribution of the directional derivative induced by the \acro{GP}.
Specifically, it turns out that beliefs about the gradient with \emph{identical} uncertainty may nonetheless have \emph{different} probabilities of descent along the expected gradient.
Further and perhaps surprisingly, the expected gradient is not necessarily the direction maximizing the probability of descent -- in fact, these directions can be nearly orthogonal.
In other words, simply minimizing the gradient variance and moving in the direction of the expected gradient may lead to suboptimal (local) optimization performance.

With this insight, we propose a scheme for local Bayesian optimization that alternates between identifying the direction of most probable descent, then moving in that direction.
The result is a local optimizer that is efficient by design.
To this end, we derive a closed-form solution for the direction of most probable descent at a given location in the input space under a \acro{GP} belief about the objective function.
We then design a corresponding closed-form acquisition function that optimizes (an upper bound of) the one-step maximum descent probability.
Taken together, these components comprise an elegant and efficient optimization scheme.
We demonstrate empirically that, across many synthetic and real-world functions, our method outperforms the aforementioned prior realization of this framework and is competitive against other, significantly more complicated baselines.
\section{Preliminaries}

\looseness=-1
We first introduce the problem setting and the local Bayesian optimization framework.
We aim to numerically solve optimization problems of the form:
\[
\text{given } \mathbf{x}_0 \in D
\text{, find } \mathbf{x}^* = \argmin_{\mathbf{x} \in D(\mathbf{x}_0)} f(\mathbf{x}),
\]
where $f\colon D \rightarrow \mathbb{R}$ is the black-box objective function we wish to optimize locally from a starting point $\mathbf{x}_0$, and $D(\mathbf{x}_0)$ is the local region around  $\mathbf{x}_0$ inside the domain $D$.
We model the objective function as a black box, and only assume that we may obtain potentially noisy function evaluations $y = f(\mathbf{x}) + \varepsilon$, where $\varepsilon \sim \mathcal{N}(0, \sigma^2)$, at locations of our choosing.
We further assume the gradient cannot be measured directly, but only estimated from such noisy evaluations of the function.
Finally, we consider the case where querying the objective is relatively expensive, limiting the number of times it may be evaluated.
This constraint on our querying budget requires strategically selecting where to evaluate during optimization.

Bayesian optimization (\acro{BO}) is one potential approach to this problem that offers unparalleled sample efficiency.
\acro{BO} constructs a probabilistic model of the objective function, typically a Gaussian process (\acro{GP}) \cite{rasmussen2006gaussian}, and uses this model to design the next point(s) to evaluate the objective.
After each observation, the \acro{GP} is updated to reflect our current belief about the objective, which is then used to inform future decisions.
We refer the reader to \citet{garnett2022bayesian} for a thorough treatment of \acro{GP}s and \acro{BO}.

\subsection{Local Bayesian optimization}

In many applications, the objective function $f$ is high-dimensional.
The curse of dimensionality poses a challenge for \acro{BO}, as it will take exponentially more function evaluations to sufficiently cover the search space and find the global optimum.
It may be more fruitful, therefore, to instead pursue \emph{local} optimization, where we aim to descend from the current location, by probing the objective function in nearby regions to learn about its gradient.

It turns out the \acro{BO} framework is particularly amenable to this idea, as a \acro{GP} belief on the objective function induces a \emph{joint} \acro{GP} belief with its gradient \cite{rasmussen2006gaussian},
which we may use to guide local optimization.
In particular, given a \acro{GP} belief about the objective function $f$ with a once-differentiable mean function $\mu$ and a twice-differentiable covariance function $K$, the joint distribution of noisy function evaluations observations $(\mathbf{X}, \textbf{y})$ and the gradient of $f$ at some point $\mathbf{x}$ is
\[
p \left(
\begin{bmatrix}
\mathbf{y} \\
\nabla f(\mathbf{x})
\end{bmatrix}
\right)
= \mathcal{N} \left( 
\begin{bmatrix}
\mu(\mathbf{X}) \\
\nabla \mu(\mathbf{x})
\end{bmatrix},
\begin{bmatrix}
K(\mathbf{X}, \mathbf{X}) + \sigma^2 \mathbf{I} & K(\mathbf{X}, \mathbf{x}) \nabla^\top \\
\nabla K(\mathbf{x}, \mathbf{X}) & \nabla K(\mathbf{x}, \mathbf{x}) \nabla^\top
\end{bmatrix}
\right).
\]
Here, when placed in front of $K$, the differential operator $\nabla$ indicates that we are taking the derivative of $K$ with respect to its first input; when placed behind $K$, it indicates the derivative is with respect to its second input.
Conditioned on the observations $(\mathbf{X}, \mathbf{y})$, the posterior distribution of the derivative $\nabla f(\mathbf{x})$ may be obtained as:
\begin{align}
\label{eq:grad_belief}
\begin{split}
p \big( \nabla f(\mathbf{x}) \mid \mathbf{x}, \mathbf{X}, \mathbf{y} \big) & = \mathcal{N} (\boldsymbol{\mu}_{\mathbf{x}}, \Sigma_{\mathbf{x}}), \\
\text{where } \boldsymbol{\mu}_{\mathbf{x}} & = \nabla \mu(\mathbf{x}) + \nabla K(\mathbf{x}, \mathbf{X}) \big( K(\mathbf{X}, \mathbf{X}) + \sigma^2 \mathbf{I} \big)^{-1} \big( \mathbf{y} - \mu(\mathbf{X}) \big), \\
\Sigma_{\mathbf{x}} & = \nabla K(\mathbf{x}, \mathbf{x}) \nabla^\top - \nabla K(\mathbf{x}, \mathbf{X}) \big( K(\mathbf{X}, \mathbf{X}) + \sigma^2 \mathbf{I} \big)^{-1} K(\mathbf{X}, \mathbf{x}) \nabla^\top.
\end{split}
\end{align}


Given the ability to reason about the objective function gradient given noisy function observations, we may realize a Bayesian local optimization scheme as follows. From a current location $\mathbf{x}$,
we devise a policy that first designs observations seeking relevant information about the gradient 
$\nabla f(\mathbf{x})$,
then, once satisfied, moves within the search space to a new location  (that is, update $\mathbf{x}$) seeking to descend on the objective.
A particular realization of this local \acro{BO} scheme named \acro{GIBO} was investigated by \citet{muller2021local}. In that study, the authors choose to learn about $\nabla f(\mathbf{x})$ by minimizing the uncertainty (quantified by the trace of the posterior covariance matrix) about the gradient, followed by moving in the direction of the expected gradient.
This algorithm may be thought of as simulating gradient descent, as it actively builds then follows a noisy estimate of the gradient.
Although effective, 
\acro{GIBO} fails to account for nuances in our belief about the objective function gradient and may behave suboptimally during optimization as a result.
Our work  addresses this gap by exploiting the rich structure in the belief about $\nabla f(\mathbf{x})$ to design an elegant and principled policy for local \acro{BO}.

\subsection{Related work}

\looseness=-1
We re-examine and extend the work of \citet{muller2021local}, who proposed using local \acro{BO} for the purpose of policy search in reinforcement learning (\acro{RL}).
As mentioned, their proposed algorithm \acro{GIBO} alternates between minimizing the variance of the estimate of the gradient -- this is analogous to the goal of A-optimality in optimal design -- and moving in the direction of the expected gradient.
This scheme was shown to outperform baselines such as global \acro{BO} using expected improvement \cite{jones1998efficient} and the evolutionary algorithm \acro{CMA-ES} \cite{hansen2006cma} on several problems.
Prior to this work, \citet{mania2018simple} noted that local black-box optimization is a promising approach for \acro{RL}.
They developed a simple algorithm, Augmented Random Search (\acro{ARS}), that estimates the gradient of the objective via finite differencing and random perturbations; this simple method was competitive in their experiments on \acro{RL} tasks.
\acro{GIBO} and \acro{ARS} are the two main baselines that we will be comparing our method against.

As mentioned, scaling to high-dimensional problems has been an enduring challenge in the \acro{BO} community, and there have been many proposals to make \acro{BO} ``more local'' as a way to relieve the burden of the curse of dimensionality.
In particular, several lines of research have proposed restricting the search space to only specific regions, e.g., maintaining a belief about the local optimum \cite{akrour2017local}, using trust regions \cite{eriksson2019scalable,wan2021think}, and forcing queries to stay close to past observations \cite{li2020explainability}.
Among these, of note is the \acro{T}u\acro{RBO} algorithm \cite{eriksson2019scalable}, which expands and shrinks the size of its trust regions based on the optimization history within each region, and has been shown to achieve strong performance across many tasks.
We include \acro{T}u\acro{RBO} as another baseline in our experiments.

Other approaches have considered dynamically switching from global and gradient-based local optimization, particularly when a local region is believed to contain the global optimum.
For example, \citet{mcleod2018optimization} proposed alternating between global \acro{BO} and using \acro{BFGS} for local optimization when there is high certainty that we are close to the global optimum.
\citet{diouane2021trego} leveraged the same scheme to identify good local regions and uses a trust region-based policy for its local phase.
\citet{wang2020learning}, on the other hand, proposed learning about which subregions of the search space are more likely to contain good objective values and should be locally exploited using Monte Carlo tree search, by recursively partitioning the space based on optimization performance.
The authors also showed that when combined with \acro{T}u\acro{RBO}, their algorithm achieves state-of-the-art performance on a wide range of tasks.
Our optimization method can replace the local optimizer in these approaches, and in general can act as a subroutine within a larger  framework relying on local optimization.

Tackling local optimization from a probabilistic angle, our method belongs to a larger class of probabilistic numerical methods; see chapter 4 of \citet{hennig2022probabilistic} for a thorough discussion on probabilistic numerics for local optimization.
Within this line of search are other efforts at leveraging probabilistic reasoning in optimization, including a Bayesian quasi-Newton algorithm that learns from noisy observations of the gradient \cite{hennig2013fast}, a probabilistic interpretation of the incremental proximal methods \cite{akyildiz2018incremental}, and probabilistic line searches \cite{mahsereci2015probabilistic}.

We note that \citet{roux2007topmoumoute} arrived at a similar update expression as our algorithm (see \Cref{sec:mpd}), though aiming at developing fast optimization algorithms for good generalization, a different problem from \acro{BO}.
Moreover, their derivation is devoted to justifying the natural gradient descent.
In particular, they show that the descent direction maximizing the probability of not increasing generalization error is precisely the natural gradient direction.

\section{Maximizing probability of descent}
\label{sec:mpd}

What behavior is desirable for a local optimization routine that values sample efficiency? 
We argue that we should seek to quickly identify directions that will, \emph{with high probability,} yield progress on the objective function. 
Pursuing this idea requires reasoning about the probability that a given direction leads ``downhill'' from a given location.
Although one might guess that the direction most likely to lead downhill is always the (negative) expected gradient, this is not necessarily the case.

Consider the directional derivative of the objective $f$ with respect to a unit vector $\mathbf{v}$ at point $\mathbf{x}$:
\[
\nabla_\mathbf{v} f(\mathbf{x}) = \mathbf{v}^\top \nabla f(\mathbf{x}),
\]
which quantifies the rate of change of $f$ at $\mathbf{x}$ along the direction of $\mathbf{v}$.
According to our \acro{GP} belief, $\nabla f(\mathbf{x})$ follows a multivariate normal distribution, so the directional derivative $\nabla_\mathbf{v} f(\mathbf{x})$ is then: 
\[
p \big( \nabla_\mathbf{v} f(\mathbf{x}) \mid \mathbf{x}, \mathbf{v} \big) = \mathcal{N} \big( \mathbf{v}^\top \boldsymbol{\mu}_{\mathbf{x}}, \mathbf{v}^\top \Sigma_{\mathbf{x}} \mathbf{v} \big),
\]
where $\boldsymbol{\mu}_{\mathbf{x}}$ and $\Sigma_{\mathbf{x}}$ are the mean and covariance matrix of the normal belief about $\nabla f(\mathbf{x})$, as defined in \cref{eq:grad_belief}.
This distribution allows us to reason about the probability that we descend on the objective function by moving along the direction of $\mathbf{v}$ from $\mathbf{x}$, which is simply the probability that the directional derivative is negative.
Thus, we have the following definition.
\begin{definition}[Descent probability and most probable descent direction]
Given a unit vector $\mathbf{v}$, the descent probability of the direction $\mathbf{v}$ at the location $\mathbf{x}$ is given by
\begin{equation}
\label{eq:descent_p}
\Pr \big( \nabla_\mathbf{v} f(\mathbf{x}) < 0 \mid \mathbf{x}, \mathbf{v} \big) = \Phi \biggl( - \frac{\mathbf{v}^\top \boldsymbol{\mu}_{\mathbf{x}}}{\sqrt{\mathbf{v}^\top \Sigma_{\mathbf{x}} \mathbf{v}}} \biggr),
\end{equation}
where $\Phi$ is the \acro{CDF} of the standard normal distribution.
If $\mathbf{v}^*$ achieves the maximum descent probability $\mathbf{v}^* \in \argmax_{\mathbf{v}} \Pr \big( \nabla_\mathbf{v} f(\mathbf{x}) < 0 \mid \mathbf{x}, \mathbf{v} \big)$, then we call $\mathbf{v}^*$ a most probable descent direction.
\end{definition}
Note that the definition \Cref{eq:descent_p} is scaling invariant.
Thus, the length of $\mathbf{v}^*$ does not matter since the descent probability only depends on its direction.
Moreover, we note that descent probability depends on both the expected gradient $\boldsymbol{\mu}_\mathbf{x}$ and the gradient uncertainty $\Sigma_{\mathbf{x}}$.
Therefore, learning about the gradient by minimizing uncertainty via the trace of the posterior covariance matrix (which does not consider the expected gradient) and moving in the direction of the negative expected gradient (which does not consider uncertainty in the gradient) in a decoupled manner may lead to suboptimal behavior.
We first present a simple example to demonstrate the nuances that are not captured by this scheme and to motivate our proposed solution.

\subsection{The (negative) expected gradient does not always maximize descent probability}
\label{sec:neg_vs_mpd}

\begin{figure}[t]
\centering
\subfloat[][
$\mathcal{N} \left( 
\begin{bmatrix}
-1 \\
\phantom{-}0
\end{bmatrix},
\begin{bmatrix}
0.01 & 0 \\
0\phantom{.01} & 1
\end{bmatrix}
\right)$
]{\includegraphics[width=0.33\linewidth]{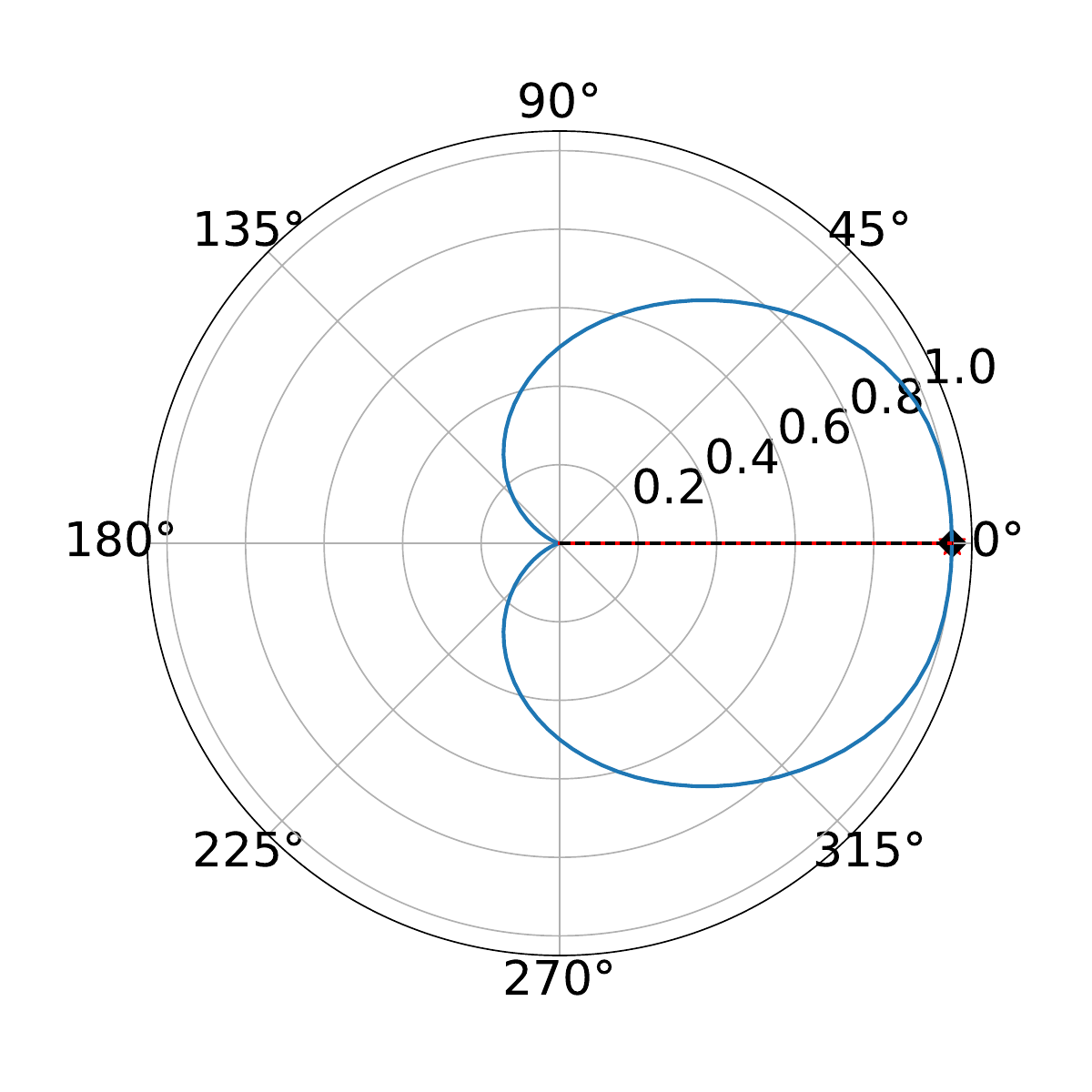}} 
\subfloat[][
$\mathcal{N} \left( 
\begin{bmatrix}
-1 \\
\phantom{-}0
\end{bmatrix},
\begin{bmatrix}
1 & 0\phantom{.01} \\
0 & 0.01
\end{bmatrix}
\right)$
]{\includegraphics[width=0.33\textwidth]{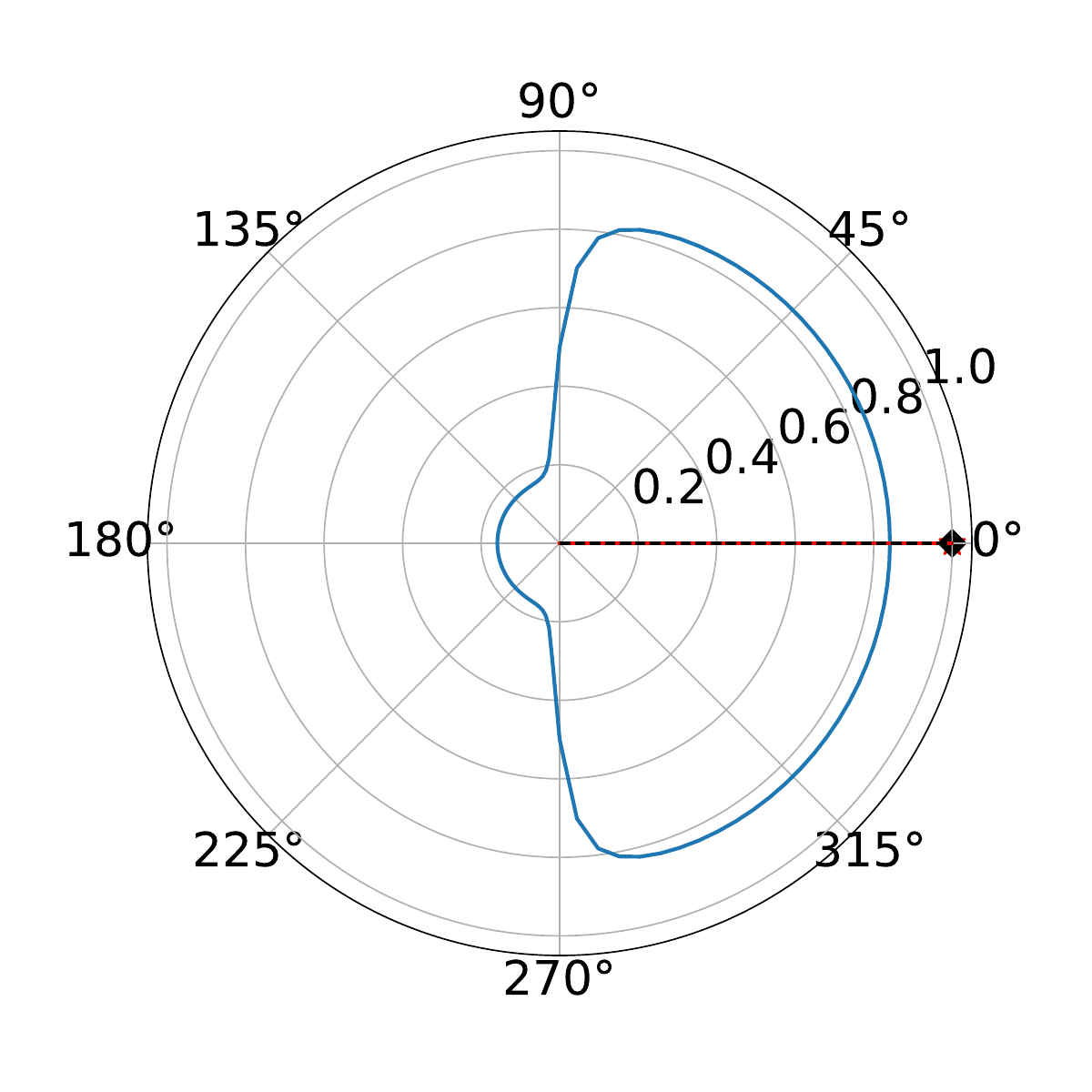}} 
\subfloat[][
$\mathcal{N} \left( 
\begin{bmatrix}
-0.5 \\
-1\phantom{.5}
\end{bmatrix},
\begin{bmatrix}
0.01 & 0 \\
0\phantom{.01} & 1
\end{bmatrix}
\right)$
]{\includegraphics[width=0.33\textwidth]{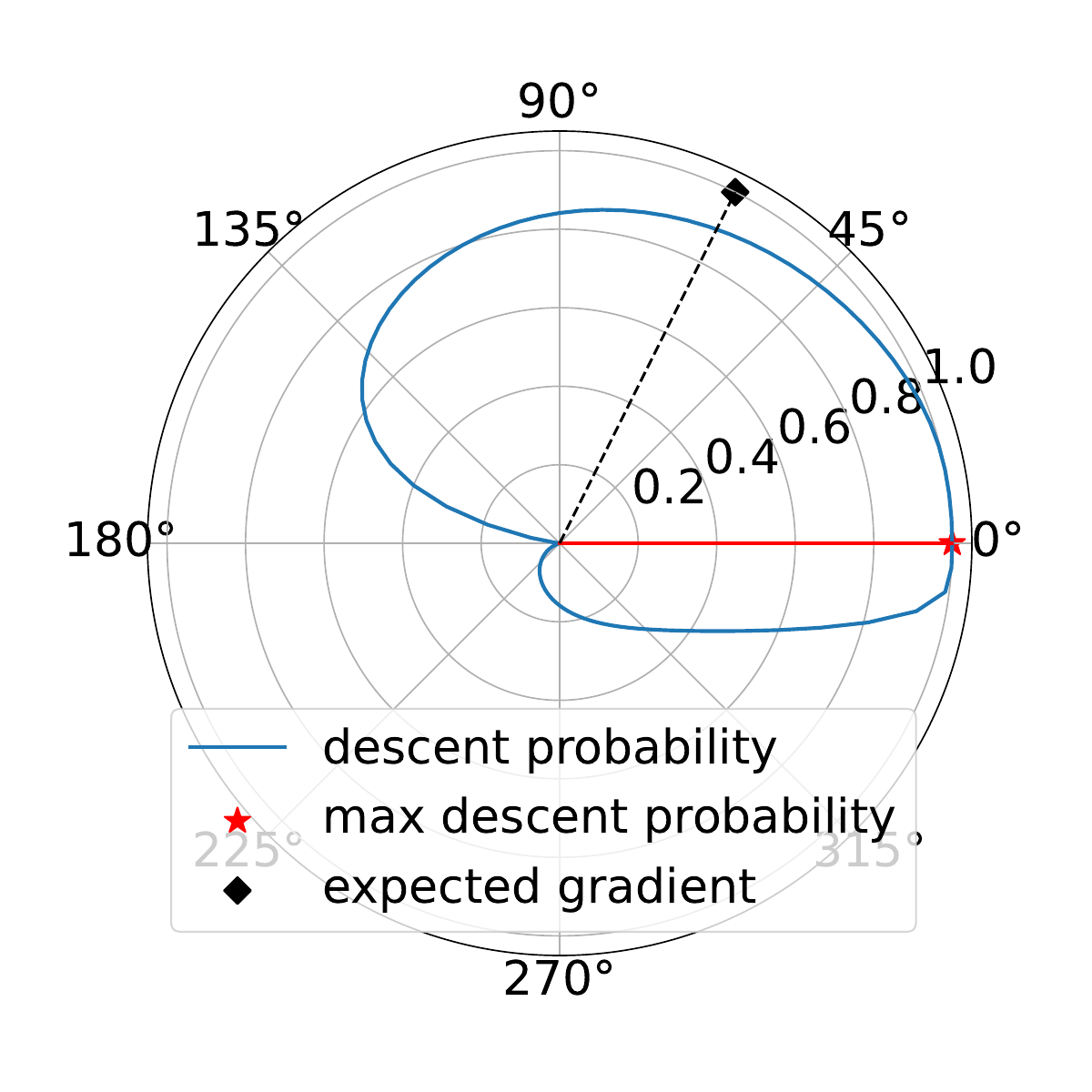}}
\vskip 0.1in
\caption{
Polar plots of descent probability (\textcolor{blue}{blue}).
The most probable descent direction $\mathbf{v}^*$ is marked in \textcolor{red}{red}.
The direction of the (negative) expected gradient is marked in \textbf{black}.
\textbf{Left:} the direction $\mathbf{v}^*$ and the negative expected gradient match exactly.
\textbf{Center:} given the same level of uncertainty, the maximum descent probability has reduced from near certainty to only $84\%$.
\textbf{Right:} the expected gradient does not maximize the descent probability.
See \cref{sec:neg_vs_mpd} for discussion.
}
\label{fig:polar_example}
\end{figure}

In \cref{fig:polar_example}, we show polar plots of the descent probability $\Pr \big( \nabla_\mathbf{v} f(\mathbf{x}) < 0 \mid \mathbf{x}, \mathbf{v} \big)$ with respect to different beliefs about the gradient.
The angles in the polar plots are the angles between $\mathbf{v}$ and the vector $[1, 0]\smash{^\top}$.
Critically for the discussion below, the uncertainty in the gradient, as measured by the trace of the covariance matrix, is identical for all three examples.

In the first example in the left panel of \cref{fig:polar_example},
the negative expected gradient happens to maximize the descent probability, and moving in this direction is almost certain to lead downhill.
In the middle panel, the expected gradient is the same as in the left panel, but the covariance matrix has been permuted.
Here, the negative expected gradient again maximizes the descent probability;
however, the largest descent probability is now much lower.
In fact, there is non-negligible probability that the descent direction is in the \emph{opposite} direction.
This is because most of the uncertainty we have about the gradient concentrates on the first element of $\boldsymbol{\mu}_\mathbf{x}$, which determines its direction.
We note that the situation in the left panel is inarguably preferable to that in the middle panel, but distinguishing these two is impossible from uncertainty in $\nabla f(\mathbf{x})$ alone.

Finally, in the right panel, the direction of the expected gradient has rotated with respect to that in the first two panels.
Now the (negative) expected gradient is entirely different from the most probable descent direction.
Intuitively, the variance in the first coordinate is much smaller than in the second coordinate, and thus the mean in the first coordinate is more likely to have the same sign as the true gradient.
However, using negative expected gradient as a descent direction entirely ignores the uncertainty estimate in the gradient.
This example shows that, when we reason about the descent of a function, the mean vector $\boldsymbol{\mu}_\mathbf{x}$ and the covariance matrix $\Sigma_\mathbf{x}$ need to be jointly considered, as the probability of descent depends on both of these quantities (\cref{eq:descent_p}).

\subsection{Computing the most probable descent direction}

In light of the above discussion, we propose a local \acro{BO} algorithm centered entirely around the local descent probability.
As a first step, we show in the following how to compute the most probable descent direction $\mathbf{v}^* = \argmax_\mathbf{v} \Pr \big( \nabla_\mathbf{v} f(\mathbf{x}) < 0 \mid \mathbf{x}, \mathbf{v} \big)$ at a given location given data.

\begin{theorem}
\label{thm:compute_mpd}
Suppose that the belief about the gradient is $p \big(\nabla f(\mathbf{x}) \mid \mathbf{x}, \mathbf{X}, \mathbf{y} \big) = \mathcal{N} (\boldsymbol{\mu}_{\mathbf{x}}, \Sigma_{\mathbf{x}})$, where the posterior covariance $\Sigma_\mathbf{x}$ is positive definite.
Then, the unique (up to scaling) most probable descent direction is
\[
    \argmax_{\mathbf{v}} \Pr \big( \nabla_\mathbf{v} f(\mathbf{x}) < 0 \mid \mathbf{x}, \mathbf{v} \big) = - \Sigma_{\mathbf{x}}^{-1} \boldsymbol{\mu}_{\mathbf{x}} 
\]
with the corresponding maximum descent probability
\[
    \max_{\mathbf{v}} \Pr \big( \nabla_\mathbf{v} f(\mathbf{x}) < 0 \mid \mathbf{x}, \mathbf{v} \big) = \Phi\Bigl(\sqrt{\boldsymbol{\mu}_\mathbf{x}^\top \Sigma_{\mathbf{x}}^{-1} \boldsymbol{\mu}_{\mathbf{x}}}\Bigr) . 
\]
\end{theorem}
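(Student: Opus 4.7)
The plan is to reduce the problem to a standard Cauchy--Schwarz argument after a whitening transformation. Since $\Phi$ is strictly monotone increasing, maximizing the descent probability in \cref{eq:descent_p} is equivalent to maximizing its argument,
\[
    g(\mathbf{v}) \;=\; -\frac{\mathbf{v}^\top \boldsymbol{\mu}_\mathbf{x}}{\sqrt{\mathbf{v}^\top \Sigma_\mathbf{x} \mathbf{v}}},
\]
over all nonzero $\mathbf{v}$. Because $g$ is invariant under positive rescaling of $\mathbf{v}$, and because $\Sigma_\mathbf{x}$ is positive definite so that $\mathbf{v}^\top \Sigma_\mathbf{x} \mathbf{v} > 0$ whenever $\mathbf{v}\neq \mathbf{0}$, I would first normalize by imposing the constraint $\mathbf{v}^\top \Sigma_\mathbf{x} \mathbf{v} = 1$. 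The problem then becomes the constrained minimization of the linear functional $\mathbf{v} \mapsto \mathbf{v}^\top \boldsymbol{\mu}_\mathbf{x}$ subject to this quadratic constraint.

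Next I would introduce the whitening change of variables $\mathbf{u} = \Sigma_\mathbf{x}^{1/2} \mathbf{v}$, using the symmetric positive-definite square root (which exists and is invertible since $\Sigma_\mathbf{x}$ is positive definite). The constraint becomes $\|\mathbf{u}\|_2 = 1$, and the objective becomes
\[
    \mathbf{v}^\top \boldsymbol{\mu}_\mathbf{x} \;=\; \mathbf{u}^\top \bigl(\Sigma_\mathbf{x}^{-1/2} \boldsymbol{\mu}_\mathbf{x}\bigr).
\]
This is a standard Cauchy--Schwarz setup: minimizing $\mathbf{u}^\top \mathbf{w}$ over the unit sphere yields the value $-\|\mathbf{w}\|_2$, attained uniquely at $\mathbf{u}^\star = -\mathbf{w}/\|\mathbf{w}\|_2$ (provided $\mathbf{w}\neq\mathbf{0}$). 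Applied with $\mathbf{w} = \Sigma_\mathbf{x}^{-1/2}\boldsymbol{\mu}_\mathbf{x}$, this gives the minimum value $-\sqrt{\boldsymbol{\mu}_\mathbf{x}^\top \Sigma_\mathbf{x}^{-1}\boldsymbol{\mu}_\mathbf{x}}$ and the optimizer $\mathbf{u}^\star \propto -\Sigma_\mathbf{x}^{-1/2}\boldsymbol{\mu}_\mathbf{x}$. Transforming back via $\mathbf{v}^\star = \Sigma_\mathbf{x}^{-1/2}\mathbf{u}^\star$ yields $\mathbf{v}^\star \propto -\Sigma_\mathbf{x}^{-1}\boldsymbol{\mu}_\mathbf{x}$, and substituting back into $g$ produces the claimed maximum $\Phi\bigl(\sqrt{\boldsymbol{\mu}_\mathbf{x}^\top \Sigma_\mathbf{x}^{-1} \boldsymbol{\mu}_\mathbf{x}}\bigr)$.

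The main subtlety, rather than any technical obstacle, is the uniqueness claim and a degenerate edge case. Uniqueness up to scaling follows from the equality case of the Cauchy--Schwarz inequality, which is strict unless $\mathbf{u}$ is collinear with $\mathbf{w}$; translating back through the invertible map $\Sigma_\mathbf{x}^{1/2}$ preserves the collinearity characterization, so $\mathbf{v}^\star$ is unique up to positive scaling. The degenerate case $\boldsymbol{\mu}_\mathbf{x} = \mathbf{0}$ would make every direction equally probable (descent probability $\tfrac{1}{2}$) and should either be excluded or handled as a remark; I would note it explicitly, observing that the stated formula $-\Sigma_\mathbf{x}^{-1}\boldsymbol{\mu}_\mathbf{x}$ then collapses to the zero vector, which is not a direction, and so the uniqueness-up-to-scaling statement implicitly presumes $\boldsymbol{\mu}_\mathbf{x}\neq\mathbf{0}$.
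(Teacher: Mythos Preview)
Your proof is correct and follows the same overall idea as the paper's---whiten with a square root of $\Sigma_\mathbf{x}$ and optimize over the resulting unit sphere---but the mechanics differ slightly. The paper squares the objective to obtain a generalized Rayleigh quotient, uses the Cholesky factor $\Sigma_\mathbf{x}=\mathbf{L}\mathbf{L}^\top$ for the change of variables, identifies the maximizer as the top eigenvector of the rank-one matrix $\mathbf{L}^{-1}\boldsymbol{\mu}_\mathbf{x}\boldsymbol{\mu}_\mathbf{x}^\top\mathbf{L}^{-\top}$, and then has to check both signs $\pm\Sigma_\mathbf{x}^{-1}\boldsymbol{\mu}_\mathbf{x}$ at the end to recover the correct one. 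Your route is a bit more direct: by keeping the linear objective and applying Cauchy--Schwarz on the sphere, the sign falls out automatically and the uniqueness statement follows immediately from the equality case. You also flag the degenerate situation $\boldsymbol{\mu}_\mathbf{x}=\mathbf{0}$, which the paper leaves implicit. Either argument is fine; yours is marginally cleaner and handles uniqueness more explicitly.
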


\begin{proof}
As $\Phi\left(\cdot\right)$ is monotonic, we can reframe the problem as
\[
\mathbf{v}^*  = \argmax_\mathbf{v} \Pr \big( \nabla_\mathbf{v} f(\mathbf{x}) < 0 \mid \mathbf{x}, \mathbf{v} \big)
 = \argmax_\mathbf{v} \Phi \left( - \frac{\mathbf{v}^\top \boldsymbol{\mu}_{\mathbf{x}}}{\sqrt{\mathbf{v}^\top \Sigma_{\mathbf{x}} \mathbf{v}}} \right)
 = \argmax_\mathbf{v} - \frac{\mathbf{v}^\top \boldsymbol{\mu}_{\mathbf{x}}}{\sqrt{\mathbf{v}^\top \Sigma_{\mathbf{x}} \mathbf{v}}}.
\]
Next, we square the objective, and the maximizer is still the same (up to sign).
That is, if $\mathbf{v}^*$ is the maximizer of the squared objective:
\begin{equation}
\label{eq:quadratic_opt}
\mathbf{v}^* = \argmax_\mathbf{v} ~ \frac{\mathbf{v}^\top \boldsymbol{\mu}_{\mathbf{x}} \boldsymbol{\mu}_{\mathbf{x}}^\top \mathbf{v}}{\mathbf{v}^\top \Sigma_{\mathbf{x}} \mathbf{v}},
\end{equation}
then either $\mathbf{v}^*$ or $- \mathbf{v}^*$ maximizes the descent probability.
Let $\Sigma_\mathbf{x} = \mathbf{L} \mathbf{L}^\top$ be the Cholesky decomposition of $\Sigma_{\mathbf{x}}$, where $\mathbf{L}$ has to be nonsingular.
A change of variable $\mathbf{v} = \mathbf{L}^{-\top} \mathbf{w}$ gives
\[
\frac{\mathbf{v}^\top \boldsymbol{\mu}_{\mathbf{x}} \boldsymbol{\mu}_{\mathbf{x}}^\top \mathbf{v}}{\mathbf{v}^\top \Sigma_{\mathbf{x}} \mathbf{v}} = \frac{\mathbf{w}^\top \mathbf{L}^{-1} \boldsymbol{\mu}_\mathbf{x} \boldsymbol{\mu}_\mathbf{x}^\top \mathbf{L}^{-\top} \mathbf{w}}{\mathbf{w}^\top \mathbf{w}},
\]
which is exactly the Rayleigh quotient of $\mathbf{L}^{-1} \boldsymbol{\mu}_\mathbf{x} \boldsymbol{\mu}_\mathbf{x}^\top \mathbf{L}^{-\top}$.
Note that this is a rank-$1$ matrix with top eigenvector $\mathbf{L}^{-1} \boldsymbol{\mu}_{\mathbf{x}}$ and corresponding eigenvalue  $\boldsymbol{\mu}_{\mathbf{x}}^\top \Sigma_{\mathbf{x}}^{-1} \boldsymbol{\mu}_{\mathbf{x}}$.
Thus, the maximizer $\mathbf{w}^*$ is given by
\begin{align*}
    \mathbf{w}^* = \mathbf{L}^{-1} \boldsymbol{\mu}_{\mathbf{x}}.
\end{align*}
Therefore, the maximizer to \cref{eq:quadratic_opt} is $\mathbf{v}^* = \mathbf{L}^{-\top} \mathbf{L}^{-1} \mathbf{w}^* = \Sigma_{\mathbf{x}}^{-1} \boldsymbol{\mu}_{\mathbf{x}}$.
Plug both $\Sigma_{\mathbf{x}}^{-1} \boldsymbol{\mu}_{\mathbf{x}}$ and $-\Sigma_{\mathbf{x}}^{-1} \boldsymbol{\mu}_{\mathbf{x}}$ back into \Cref{eq:descent_p}.
It is easy to check that the direction along $-\Sigma_{\mathbf{x}}^{-1} \boldsymbol{\mu}_{\mathbf{x}}$ is the desired maximizer.
\end{proof}

\cref{thm:compute_mpd} states that the most probable descent direction can be computed by simply solving a linear system.
Being able to compute this quantity allows us to always move within the search space in the direction that most likely improves the objective value, which, as we have seen, is not necessarily the negative expected gradient.
This helps us to realize the ``update'' portion of our local \acro{BO} algorithm, where we iteratively move from the current location $\mathbf{x}$ in the most probable descent direction $\mathbf{v}^*$.
That is, we repeatedly update $\mathbf{x}$ with $\mathbf{x} + \delta \mathbf{v}^*$, where $\delta$ is a small constant that acts as a step size.
This procedure is iterative in that we do not take one single step along a direction, but multiple small steps, always in the most probable descent direction at the current point, throughout.
(Note that we do not observe the value of the objective function at any of these steps.)

It is important that we stop this iterative procedure when it becomes uncertain whether we can continue to descend.
This is because we aim to move to a new location that decreases the value of the objective function, and thus should only move when descent is likely.
A natural approach is to again use the maximum descent probability, which we can compute using \cref{thm:compute_mpd}.
Specifically, we stop the iterative update when the maximum descent probability falls below a prespecified threshold $p_*$.
Once we have stopped, the final updated $\mathbf{x}$ is the location we move to at the current iteration of the \acro{BO} loop.
In our experiments, we set the step size to $\delta = 0.001$ and the descent probability threshold to $p_* = 65\%$, which we find to work well empirically.

\subsection{Acquisition function via look-ahead maximum descent probability}

When the maximum descent probability falls below the threshold $p_{*}$, we begin selecting queries to learn about the gradient in the current location so as to maximize the probability of descent.
Here we derive an acquisition function seeking data that will, in expectation, best improve the highest descent probability.
For maximum flexibility, we consider the batch setting where we may gather multiple measurements simultaneously, although we only use the sequential case in our experiments.

In particular, the acquisition function we would like to use for a batch of potential query points $\mathbf{Z}$ is:
\begin{align}
\label{eq:acq_0}
\begin{split}
\alpha_0 \big(\mathbf{Z} \big) & = \mathbb{E}_{\mathbf{y} \mid \mathbf{Z}} \Big[ \max_{\mathbf{v}} \Pr \big( \nabla_{\mathbf{v}} f(\mathbf{x}) < 0 \mid \mathbf{x}, \mathbf{Z} \big) \Big] \\
& = \mathbb{E}_{\mathbf{y} \mid \mathbf{Z}} \left[\Phi\left(\sqrt{\boldsymbol{\mu}_{\mathbf{x}\mid \mathbf{Z}}^\top \Sigma_{\mathbf{x} \mid \mathbf{Z}}^{-1} \boldsymbol{\mu}^{}_{\mathbf{x} \mid \mathbf{Z}}}\right)\right],
\end{split}
\end{align}
where $\smash{\boldsymbol{\mu}_{\mathbf{x} \mid \mathbf{Z}}}$ and $\smash{\Sigma_{\mathbf{x} \mid \mathbf{Z}}}$ are the posterior mean and covariance of the belief about $\nabla f(\mathbf{x})$, conditioned on a batch of observations at $\mathbf{Z}$ and a previously collected training set $\left(\mathbf{X}, \mathbf{y}\right)$ which we have omitted for notational clarity.
Note that the second equality is due to \cref{thm:compute_mpd}.
The above acquisition function is exactly the look-ahead maximum descent probability.
Namely, $\alpha_0\left(\mathbf{Z}\right)$ is the expected maximum descent probability after querying $\mathbf{Z}$.

Unfortunately, this expectation is challenging to compute,
so we opt for another acquisition function that approximates \cref{eq:acq_0} via computing the expectation of an upper bound:
\begin{equation}
\label{eq:acq}
\alpha\left(\mathbf{Z}\right) 
= \mathbb{E}_{\mathbf{y} \mid \mathbf{Z}} \left[ \boldsymbol{\mu}_{\mathbf{x} \mid \mathbf{Z}}^\top \Sigma_{\mathbf{x} \mid \mathbf{Z}}^{-1} \boldsymbol{\mu}_{\mathbf{x} \mid \mathbf{Z}} \right].
\end{equation}
We discard the (monotonic and concave) transformation given by the normal \acro{CDF} and square root, thus optimizing an upper bound by Jensen's inequality. 
The advantage to this acquisition function $\alpha$ is that, remarkably, it has a closed-form expression, as we show below.

Note that $\smash{\boldsymbol{\mu}_{\mathbf{x} | \mathbf{Z}} = \boldsymbol{\mu_\mathbf{x}} + \Sigma_{\mathbf{x}\mathbf{Z}} \Sigma_{\mathbf{Z}}^{-1} \left(\mathbf{y}_{\mathbf{Z}} - \boldsymbol{\mu_{\mathbf{Z}}}\right)}$, where $\mathbf{y}_{\mathbf{Z}} \sim \mathcal{N}( \boldsymbol{\mu}_{\mathbf{Z}}, \Sigma_{\mathbf{Z}})$.
Thus, the acquisition function in \cref{eq:acq} is an expectation of a quadratic function over a Gaussian distribution.
Let $\mathbf{L} \mathbf{L}^\top = \Sigma_{\mathbf{Z}}$ be the Cholesky decomposition of $\Sigma_{\mathbf{Z}}$ and denote $\mathbf{A} = \Sigma_{\mathbf{x}\mathbf{Z}} \mathbf{L}^{-\top}$.
Then, the acquisition function can be written as an expectation over a standard normal $\boldsymbol{\zeta}$:
\[
\alpha\left(\mathbf{Z}\right) = \mathbb{E}_{\boldsymbol{\zeta} \sim \mathcal{N}(\mathbf{0}, \mathbf{I})} \left[\left(\boldsymbol{\mu}_\mathbf{x} + \mathbf{A} \boldsymbol{\zeta}\right)^\top \Sigma_{\mathbf{x}|\mathbf{Z}}^{-1} \left(\boldsymbol{\mu}_\mathbf{x} + \mathbf{A} \boldsymbol{\zeta}\right)\right].
\]
Expanding, we have:
\[
\left(\boldsymbol{\mu}_\mathbf{x} + \mathbf{A} \boldsymbol{\zeta}\right)^\top \Sigma_{\mathbf{x}|\mathbf{Z}}^{-1} \left(\boldsymbol{\mu}_\mathbf{x} + \mathbf{A} \boldsymbol{\zeta}\right) =
\boldsymbol{\mu}_{\mathbf{x}}^\top \Sigma_{\mathbf{x}|\mathbf{Z}}^{-1} \boldsymbol{\mu}_{\mathbf{x}} +
2 \boldsymbol{\mu}_{\mathbf{x}}^\top \Sigma_{\mathbf{x}|\mathbf{Z}}^{-1} \mathbf{A} \boldsymbol{\zeta} +
\boldsymbol{\zeta}^\top \mathbf{A}^\top \Sigma_{\mathbf{x}|\mathbf{Z}}^{-1} \mathbf{A} \boldsymbol{\zeta}.
\]
The expectation of each term can be computed in closed form. The first term is a constant and the second term vanishes. Finally, the
third term is the expectation of a quadratic form, yielding:
\[
    \alpha(\mathbf{Z}) = \boldsymbol{\mu}_{\mathbf{x}}^\top \Sigma_{\mathbf{x}|\mathbf{Z}}^{-1} \boldsymbol{\mu}_{\mathbf{x}} + \mathrm{tr}\left(\mathbf{A}^\top \Sigma_{\mathbf{x}|\mathbf{Z}}^{-1} \mathbf{A}\right). 
\]

This compact expression gives the closed-form solution to our acquisition function.
Note that solving a linear system with respect to $\Sigma_{\mathbf{x}|\mathbf{Z}}$ can be performed efficiently using low-rank updates to the Cholesky decomposition of $\Sigma_\mathbf{x}$.
Further, we may differentiate the acquisition function easily via automatic differentiation.
This allows us to optimize the acquisition function trivially using any gradient-based optimizer such as \acro{L-BFGS} with restart.

\begin{algorithm}[tb]
   \caption{Local \acro{BO} via \acro{MPD}}
   \label{alg:mpd}
\begin{algorithmic}[1]
    \State {\bfseries inputs} starting location $\mathbf{x}$, number of iterations $N$, number of samples for learning the gradient $M$, step size $\delta$, and minimum descent probability threshold $p_*$.
    \State Initialize the \acro{GP}.
    \For{$t = 0, \ldots, N$}
        \State Observe the objective value: $y = f(\mathbf{x}) + \varepsilon$.
        \State Update the training data $\mathcal{D} \leftarrow \mathcal{D} \cup \{ (\mathbf{x}, y) \}$ and retrain the \acro{GP}.
        \For{$m = 1, \ldots, M$} \Comment{learning the gradient}
            \State Query point: $\mathbf{z}^* = \arg \max_\mathbf{z} \alpha(\mathbf{z})$.
            \State Observe the objective value: $y_\mathbf{z} = f(\mathbf{z}) + \varepsilon$.
            \State Update the training data $\mathcal{D} \leftarrow \mathcal{D} \cup \{ (\mathbf{z}, y_\mathbf{z}) \}$ and the \acro{GP}.
        \EndFor
        \While{$\max_\mathbf{v} \Pr \big( \nabla_\mathbf{v} f(\mathbf{x}) < 0 \mid \mathbf{x}, \mathbf{v} \big) > p_*$} \Comment{move by maximizing descent probability}
            \State Compute the most probable descent direction $\mathbf{v}^* \leftarrow \argmax_\mathbf{v} \Pr \big( \nabla_\mathbf{v} f(\mathbf{x}) < 0 \mid \mathbf{x}, \mathbf{v} \big)$.
            \State Move in the most probable descent direction: $\mathbf{x} \leftarrow \mathbf{x} + \delta \mathbf{v}^*$.
        \EndWhile
    \EndFor
\end{algorithmic}
\end{algorithm}

This completes our algorithm, local \acro{BO} via most-probable descent, or \acro{MPD}, which is summarized in \cref{alg:mpd}.
The algorithm alternates between learning about the gradient of the objective function using the acquisition function discussed above, and then iteratively moving in the most probable descent direction until further progress is unlikely, as described in \cref{sec:mpd}.
\vspace{-3pt}
\section{Experiments}
\label{sec:exp}
\vspace{-3pt}

We now present results from extensive experiments that evaluate our method \acro{MPD} against three baselines: (1) \acro{GIBO} \cite{muller2021local}, which performs local \acro{BO} by minimizing the trace of the posterior covariance matrix of the gradient and uses the expected gradient in the update step; (2) \acro{ARS} \cite{mania2018simple}, which estimates the gradient of the objective via finite difference with random perturbations; and (3) \acro{T}u\acro{RBO} \cite{eriksson2019scalable}, a trust region-based Bayesian optimization method.

\citet{muller2021local} provide code implementation under the \acro{MIT} license for \acro{GIBO}, \acro{ARS}, and various test objectives.
We extend this codebase to implement \acro{MPD} and conduct our own numerical experiments.
For the synthetic (\cref{sec:synthetic}) and reinforcement learning (\cref{sec:rl}) objectives, we use the provided experimental settings.
For the other objectives (\cref{sec:custom}), we set the number of samples to learn about the gradient per iteration $M = 1$.
For each objective function tested, we run each algorithm ten times from the same set of starting points sampled from a Sobol sequence over the (box-bounded) domain.
In each of the following plots, we show the progressive mean objective values as a function of the number of queries with error bars indicating (plus or minus) one standard error.
Experiments were performed on a small cluster built from commodity hardware comprising approximately 200 Intel Xeon \acro{CPU} cores (no \acro{GPU}s), with approximately 10 \acro{GB} of \acro{RAM} available to each core.
Our implementation is available at \url{https://github.com/kayween/local-bo-mpd}.

\vspace{-3pt}

\subsection{Synthetic objectives}
\label{sec:synthetic}

Our first experiments involve maximizing, over the $d$-dimensional unit hypercube $[0, 1]^d$, synthetic objective functions that are generated by drawing samples from a \acro{GP} with an \acro{RBF} kernel.
We refer to \S 4.1 of \citet{muller2021local} for more details regarding the experimental setup.
While \citet{muller2021local} tested for dimensions up to $36$, we opt for much higher-dimensional objectives: $d \in \{ 25, 50, 100 \}$.
Each run has a budget of 500 function evaluations.
We visualize the results in \cref{fig:synthetic}, which shows that
\acro{MPD} was able to optimize these functions at a faster rate than the other baselines.
Note that the difference in performance becomes larger as the dimensionality $d$ grows, indicating that our method scales well to high dimensions.

\subsection{MuJoCo objectives}
\label{sec:rl}

\begin{figure}
\centering
\input{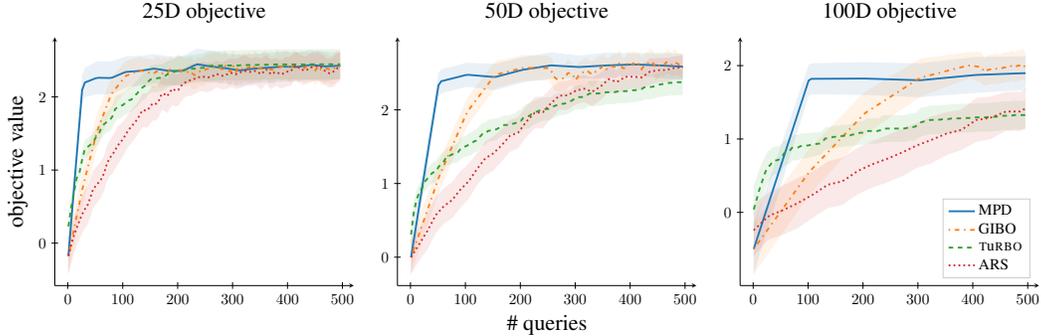}
\vskip -0.1in
\caption{
Progressive optimized objective value on high-dimensional synthetic functions.
\acro{MPD} consistently finds higher objective values faster than other baselines.
}
\label{fig:synthetic}
\end{figure}

\begin{figure}
\centering
\input{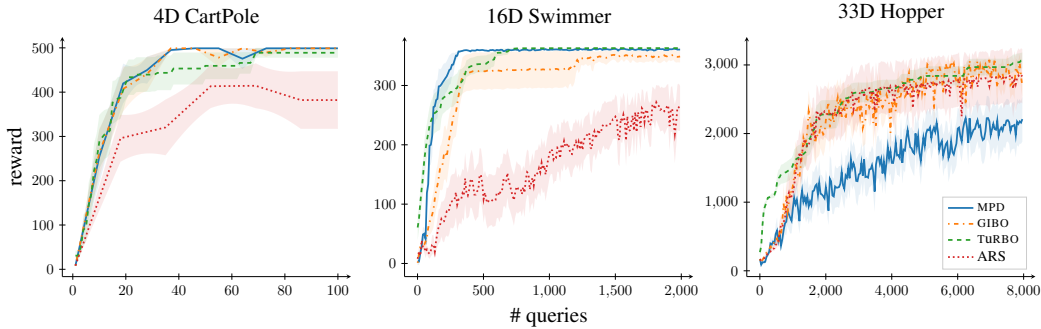}
\vskip -0.1in
\caption{
Progressive objective values observed on the MuJuCo tasks.
\acro{MPD} is competitive on CartPole and Swimmer.
}
\label{fig:rl}
\end{figure}

\looseness=-1
The second set of experiments are reinforcement learning MuJoCo locomotion tasks \cite{todorov2012mujoco}, where each task involves learning a linear policy that maps states to actions to maximize the reward received from the learning environment.
We use the same three environments in \citet{muller2021local}, CartPole-v1 with 4 parameters, Swimmer-v1 with 16, and Hopper-v1 with 33, to evaluate the methods and show the results in \cref{fig:rl}.
\acro{MPD} is competitive in the first two tasks but progresses slower than the other baselines on Hopper-v1.
We conduct a thorough investigation into the cause of \acro{MPD}'s failure on the Hopper function and present our findings in \cref{sec:hopper}.
In short, the experiments on Hopper-v1 employ a state normalization scheme (described in \S 3.3 of \citet{muller2021local}) that leads to systematic differences in the behavior of \acro{GIBO} and \acro{MPD}.
By controlling for the effect of state normalization in our comparison of the two algorithms, we find that the performance of \acro{GIBO} and that of \acro{MPD} are statistically comparable.

\subsection{Other objective functions}
\label{sec:custom}

We further evaluate our method on other real-world objective functions.
The first two functions represent inverse problems from physics and engineering. The first is from electrical engineering, where we seek to maximize the fit of a theoretical physical model of an electronic circuit to observed data. There are nine parameters in total, and we set the budget to 500 evaluations.
The second is a problem from cosmology \cite{reid2010cosmological}, where we aim to configure a cosmological model/physical simulator to fit data observed from the Universe.
In particular, our objective is to maximize the log likelihood of the physical model parameterized by various physics-related constants that are to be tuned.
We follow the setting in \citet{eriksson2019scalable}, which presents a harder optimization problem with 12 parameters and much larger bounds, and set the budget at 2000 evaluations.
Our third objective function uses the rover trajectory planning problem \cite{wang2018batched}.
This involves tuning the locations of 100 points on a two-dimensional space that map the trajectory of a rover to minimize a cost, thus making up a 200-dimensional optimization problem.
We set the budget to be 1000 function evaluations.

\begin{figure}
\centering
\input{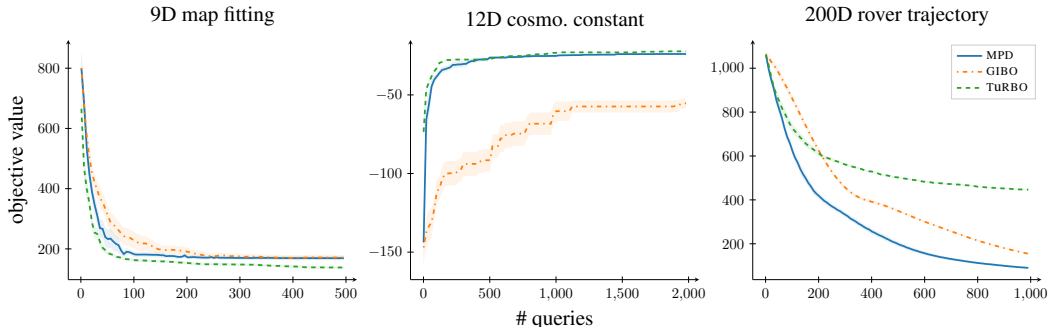}
\vskip -0.3in
\caption{
Progressive objective values observed on real-world tasks.
\acro{MPD} is competitive against other baselines on all tasks.
}
\label{fig:custom}
\end{figure}

We visualize optimization performance on these three objective functions in \cref{fig:custom}.
Our proposed policy \acro{MPD} is consistently competitive against both \acro{GIBO} and \acro{T}u\acro{RBO}.
Most notably, in the cosmological constant learning problem, \acro{MPD} was able to make significant progress immediately and ultimately outperforms its closest spiritual competitor \acro{GIBO}.

\subsection{Ablation study}

\begin{table}[t]
\centering
\caption{
Average terminal optimized objective values and standard errors of different variants of \acro{MPD}.
Results that are better than those of our baseline \acro{GIBO} are highlighted \textbf{bold}.
}
\begin{tabular}{cccc}
\toprule
 & \begin{tabular}[c]{@{}c@{}}16D Swimmer\\ (maximization)\end{tabular} & \begin{tabular}[c]{@{}c@{}}12D cosmo.\ constant\\ (maximization)\end{tabular} & \begin{tabular}[c]{@{}c@{}}200D rover trajectory\\ (minimization)\end{tabular} \\
\midrule
\acro{MPD}$\bigl( p_* = 65\%, \delta = 10^{-3} \bigr)$ & $\mathbf{360.50 ~ (0.61)}$ & $\mathbf{-23.97 ~ (0.34)}$ & $\mathbf{89.89 ~ (3.88)}$ \\
\acro{GIBO} & $348.88 ~ (10.11)$ & $-55.25 ~ (3.23)$ & $152.77 ~ (2.26)$ \\
\midrule
trace + \acro{MPD} & $350.58 ~ (9.35)$ & $\mathbf{-27.72 ~ (1.16)}$ & $\mathbf{84.17 ~ (2.10)}$ \\
\acro{MPD} + expected gradient & $340.12 ~ (12.75)$ & $\mathbf{-21.24 ~ (0.04)}$ & $293.08 ~ (8.12)$ \\
\midrule
\acro{MPD}$(p_*=50\%)$ & $342.36 ~ (13.10)$ & $\mathbf{-24.29 ~ (0.10)}$ & $\mathbf{51.48 ~ (3.44)}$ \\
\acro{MPD}$(p_*=85\%)$ & $294.67 ~ (38.16)$ & $\mathbf{-31.08 ~ (0.86)}$ & $\mathbf{142.63 ~ (5.57)}$ \\
\acro{MPD}$(p_*=95\%)$ & $15.97 ~ (5.46)$ & $\mathbf{-31.86 ~ (0.25)}$ & $\mathbf{140.44 ~ (6.95)}$ \\
\midrule
\acro{MPD}$\bigl( \delta = 10^{-4} \bigr)$ & $\mathbf{362.06 ~ (0.63)}$ & $\mathbf{-24.22 ~ (0.53)}$ & $\mathbf{90.99 ~ (3.29)}$ \\
\acro{MPD}$\bigl( \delta = 10^{-2} \bigr)$ & $350.15 ~ (10.92)$ & $\mathbf{-25.73 ~ (0.39)}$ & $\mathbf{98.72 ~ (4.42)}$ \\
\bottomrule
\end{tabular}
\label{tab:component}
\end{table}

We now present results from various ablation studies to offer insight into the components of our method \acro{MPD} and its hyperparameters, specifically the descent probability threshold $p_*$ ($65\%$ as the default) and the step size $\delta$ ($0.001$ as the default), as described in \cref{sec:mpd}.

\looseness=-1
First, one may reasonably ask which of the two novel components of \acro{MPD} -- either the learning phase that seeks to maximize expected posterior descent probability, or the update phase that moves in the most probable descent direction -- is responsible for the performance improvement compared to \acro{GIBO}.
We address this question by comparing the performance of \acro{MPD} against two variants: (1) trace + \textbf{\acro{MPD}}, which learns about the gradient by minimizing the trace of the posterior covariance matrix and moves in the most probable descent direction, and (2) \textbf{\acro{MPD +} expected gradient}, which uses our scheme for identifying the most probable descent direction, then moves in the direction of the (negative) expected gradient.
The second section of \cref{tab:component} shows the average terminal objective values of these \acro{MPD} variants on three tasks that \acro{MPD} outperforms \acro{GIBO}: Swimmer-v1, cosmological constant learning, and rover trajectory planning.
We observe that swapping out either component of \acro{MPD} does not consistently improve from \acro{GIBO} as much as \acro{MPD} does.
This indicates that the two components of our \acro{MPD} algorithm work in tandem and both are needed to successfully realize our local \acro{BO} scheme.

\looseness=-1
In particular, the components of our method are coupled: because the expected gradient and the most probable descent direction are not the same in general, spending evaluation budget to learn about one and then using the other to move may not work well.
\acro{GIBO}’s acquisition function minimizes the trace of the posterior covariance and therefore aims to make the expected gradient estimate more accurate, but it is unclear whether it will necessarily estimate the most probable descent direction accurately.
On the other hand, our acquisition function focuses on the one-step maximum descent probability directly.
\acro{GIBO}’s ``moving'' policy, moving in the direction of the (negative) expected gradient (which may not be the most probable descent direction), may not necessarily benefit from having a descent direction with a high descent probability (which could point in a different direction), and is therefore incompatible with our acquisition function.

We also tested \acro{MPD} with three other values for the minimum descent probability threshold $p_* \in \{ 50\%, 85\%, 95\% \}$ (described in \cref{sec:mpd}).
The first variant with $p_* = 50\%$ is less conservative when moving to a new location than our default policy with $p_* = 65\%$, while the other two variants are more conservative.
In the third section of \cref{tab:component}, we observe that the more conservative variants of \acro{MPD} are not as competitive.
For example, \acro{MPD}$(p_* = 85\%)$ sees a drop in performance on the Swimmer task, while \acro{MPD}$(p_* = 95\%)$ fails to make  significant progress altogether.
Interestingly, while the less conservative policy with $p_* = 50\%$ also does not perform as well on the two Mujoco tasks, we do observe an increase in performance in the rover trajectory planning problem.
From our experiments, we find that this rover objective function is piecewise linear within most of its domain, making finding a descent direction ``easier'' and allowing a lower value of $p_*$ to perform better.

\looseness=-1
The interpretation of the threshold $p_*$ is quite natural: it sets a threshold of the minimum probability that we would make progress by moving to a new location.
Intuitively, this hyperparameter trades off robustness versus optimism, with higher thresholds spending more budget before moving, but being more confident in their moves.
While $p_* = 65\%$ performs well in our experiments, a user can set their own threshold depending on their use case.
As observed with the rover trajectory planning problem, if there are structures within the objective function that make it ``easy'' to find a descent direction, \acro{MPD} may benefit from a lower threshold.
We might also consider dynamically setting the value of $p_*$ based on recent optimization progress -- that is, we might increase $p_*$ if we believe that we are approaching a local optimum and therefore that finding a promising descent direction is becoming more challenging.

Finally, the lower section of \cref{tab:component} shows the performance of the variants of \acro{MPD} with two additional step sizes, $10^{-4}$ and $10^{-2}$.
We observe that \acro{MPD} with $\delta = 10^{-2}$ occasionally fails to perform better than \acro{GIBO}, illustrating the potentially detrimental effect of a step size that is too large.
This step size parameter $\delta$ balances between faster convergence and taking steps that are too large, analogous to gradient descent, and may even be problem dependent.
It would be additionally interesting to analyze whether there are good ``rules of thumb'' for setting $\delta$ based on the length scale of the \acro{GP}, as smoother functions can likely support larger step sizes.

\section{Conclusions}
\label{sec:conclusion}

We develop a principled local Bayesian optimization framework that revolves around maximization of the probability of descending on the objective function.
This novel scheme is realized with (1) an update rule that iteratively moves from the current location in the direction of maximum descent probability, and (2) a mathematically elegant, computationally convenient acquisition function that aims to maximize the  probability of descent prior to our next move.
Our extensive experiments show that our policy outperforms natural baselines on a wide range of applications.

(Local) Bayesian optimization has seen a wide range of applications across science, engineering, and beyond; an extensive annotated bibliography of these applications was compiled by \citet{garnett2022bayesian}
[appendix \acro{D}].
However, it is possible to leverage \acro{BO} for nefarious purposes as well; a concrete example is constructing adversarial attacks on machine learning models \cite{suya2020hybrid,wan2021adversarial}.
Further, \acro{BO} requires human expertise and ethical considerations in many important applications, and fully automated optimization systems may run the risk of perpetuating misaligned goals in machine learning.
The authors judge the potential positive impacts on society resulting from better methods for local optimization to outweigh the potential negative impacts.

\begin{ack}


We thank Natalie Maus for her contribution to the initial stage of this work and the anonymous reviewers for their feedback during the review stage.
\acro{QN} and \acro{RG} were supported by the National Science Foundation (\acro{NSF}) under award numbers \acro{OAC}–1940224, \acro{IIS}–1845434, and \acro{OAC}-2118201.
\acro{KW} and \acro{JRG} were supported by \acro{NSF} award number \acro{IIS}-2145644.

\end{ack}

\newpage

{
\small

\bibliography{ref}

\begin{thebibliography}{27}
\providecommand{\natexlab}[1]{#1}
\providecommand{\url}[1]{\texttt{#1}}
\expandafter\ifx\csname urlstyle\endcsname\relax
  \providecommand{\doi}[1]{doi: #1}\else
  \providecommand{\doi}{doi: \begingroup \urlstyle{rm}\Url}\fi

\bibitem[Akrour et~al.(2017)Akrour, Sorokin, Peters, and
  Neumann]{akrour2017local}
Riad Akrour, Dmitry Sorokin, Jan Peters, and Gerhard Neumann.
\newblock {Local Bayesian Optimization of Motor Skills}.
\newblock In \emph{Proceedings of the 34th International Conference on Machine
  Learning}, pages 41--50, 2017.

\bibitem[Akyildiz et~al.(2018)Akyildiz, Elvira, and
  M{\'\i}guez]{akyildiz2018incremental}
{\"O}mer~Deniz Akyildiz, V{\'\i}ctor Elvira, and Joaqu{\'\i}n M{\'\i}guez.
\newblock {The Incremental Proximal Method: A Probabilistic Perspective}.
\newblock In \emph{2018 IEEE International Conference on Acoustics, Speech and
  Signal Processing (ICASSP)}, pages 4279--4283. IEEE, 2018.

\bibitem[Calandra et~al.(2016)Calandra, Seyfarth, Peters, and
  Deisenroth]{calandra2016bayesian}
Roberto Calandra, Andr{\'e} Seyfarth, Jan Peters, and Marc~Peter Deisenroth.
\newblock Bayesian optimization for learning gaits under uncertainty.
\newblock In \emph{Annals of Mathematics and Artificial Intelligence},
  volume~76, pages 5--23. Springer, 2016.

\bibitem[Diouane et~al.(2021)Diouane, Picheny, Riche, and
  Di~Perrotolo]{diouane2021trego}
Youssef Diouane, Victor Picheny, Rodolphe~Le Riche, and Alexandre~Scotto
  Di~Perrotolo.
\newblock {TREGO: a Trust-Region Framework for Efficient Global Optimization}.
\newblock 2021.
\newblock arXiv preprint arXiv:2101.06808 [stat.ML].

\bibitem[Eriksson et~al.(2019)Eriksson, Pearce, Gardner, Turner, and
  Poloczek]{eriksson2019scalable}
David Eriksson, Michael Pearce, Jacob Gardner, Ryan~D Turner, and Matthias
  Poloczek.
\newblock {Scalable Global Optimization via Local Bayesian Optimization}.
\newblock In \emph{Advances in Neural Information Processing Systems},
  volume~32, 2019.

\bibitem[Garnett(2022)]{garnett2022bayesian}
Roman Garnett.
\newblock \emph{Bayesian Optimization}.
\newblock Cambridge University Press, 2022.

\bibitem[Gopalan and Mannor(2015)]{gopalen2015thompson}
Aditya Gopalan and Shie Mannor.
\newblock {Thompson Sampling for Learning Parameterized Markov Decision
  Processes}.
\newblock In \emph{Proceedings of The 28th Conference on Learning Theory},
  volume~40, pages 861--898. PMLR, 2015.

\bibitem[Hansen(2006)]{hansen2006cma}
Nikolaus Hansen.
\newblock {The CMA Evolution Strategy: A Comparing Review}.
\newblock \emph{Towards a New Evolutionary Computation}, pages 75--102, 2006.

\bibitem[Hennig(2013)]{hennig2013fast}
Philipp Hennig.
\newblock {Fast Probabilistic Optimization from Noisy Gradients}.
\newblock In \emph{Proceedings of the 30th International Conference on Machine
  Learning}, pages 62--70, 2013.

\bibitem[Hennig et~al.(2022)Hennig, Osborne, and
  Kersting]{hennig2022probabilistic}
Philipp Hennig, Michael~A. Osborne, and Hans~P. Kersting.
\newblock \emph{Probabilistic Numerics}.
\newblock Cambridge University Press, 2022.

\bibitem[Jones et~al.(1998)Jones, Schonlau, and Welch]{jones1998efficient}
Donald~R Jones, Matthias Schonlau, and William~J Welch.
\newblock {Efficient Global Optimization of Expensive Black-Box Functions}.
\newblock \emph{Journal of Global Optimization}, 13\penalty0 (4):\penalty0
  455--492, 1998.

\bibitem[Le~Roux et~al.(2007)Le~Roux, Manzagol, and
  Bengio]{roux2007topmoumoute}
Nicolas Le~Roux, Pierre-Antoine Manzagol, and Yoshua Bengio.
\newblock Topmoumoute online natural gradient algorithm.
\newblock In \emph{Advances in Neural Information Processing Systems},
  volume~20, 2007.

\bibitem[Li and Adams(2020)]{li2020explainability}
Michael~Y Li and Ryan~P. Adams.
\newblock Explainability constraints for bayesian optimization.
\newblock \emph{6th ICML Workshop on Automated Machine Learning}, 2020.

\bibitem[Maddox et~al.(2021)Maddox, Feng, and Balandat]{maddox2021optimizing}
Wesley Maddox, Qing Feng, and Max Balandat.
\newblock {Optimizing High-Dimensional Physics Simulations via Composite
  Bayesian Optimization}.
\newblock \emph{Fourth Workshop on Machine Learning and the Physical Sciences
  (NeurIPS 2021)}, 2021.

\bibitem[Mahsereci and Hennig(2015)]{mahsereci2015probabilistic}
Maren Mahsereci and Philipp Hennig.
\newblock {Probabilistic Line Searches for Stochastic Optimization}.
\newblock In \emph{Advances in Neural Information Processing Systems},
  volume~28, 2015.

\bibitem[Mania et~al.(2018)Mania, Guy, and Recht]{mania2018simple}
Horia Mania, Aurelia Guy, and Benjamin Recht.
\newblock Simple random search of static linear policies is competitive for
  reinforcement learning.
\newblock In \emph{Advances in Neural Information Processing Systems},
  volume~31, 2018.

\bibitem[McLeod et~al.(2018)McLeod, Roberts, and
  Osborne]{mcleod2018optimization}
Mark McLeod, Stephen Roberts, and Michael~A Osborne.
\newblock Optimization, fast and slow: Optimally switching between local and
  bayesian optimization.
\newblock In \emph{Proceedings of the 35th International Conference on Machine
  Learning}, pages 3443--3452, 2018.

\bibitem[M{\"u}ller et~al.(2021)M{\"u}ller, von Rohr, and
  Trimpe]{muller2021local}
Sarah M{\"u}ller, Alexander von Rohr, and Sebastian Trimpe.
\newblock Local policy search with bayesian optimization.
\newblock In \emph{Advances in Neural Information Processing Systems},
  volume~34, 2021.

\bibitem[Rasmussen and Williams(2006)]{rasmussen2006gaussian}
Carl~Edward Rasmussen and Christopher K.~I. Williams.
\newblock \emph{Gaussian Processes for Machine Learning}.
\newblock The MIT Press, 2006.

\bibitem[Reid et~al.(2010)Reid, Percival, Eisenstein, Verde, Spergel, Skibba,
  Bahcall, Budavari, Frieman, Fukugita, et~al.]{reid2010cosmological}
Beth~A Reid, Will~J Percival, Daniel~J Eisenstein, Licia Verde, David~N
  Spergel, Ramin~A Skibba, Neta~A Bahcall, Tamas Budavari, Joshua~A Frieman,
  Masataka Fukugita, et~al.
\newblock Cosmological constraints from the clustering of the sloan digital sky
  survey dr7 luminous red galaxies.
\newblock \emph{Monthly Notices of the Royal Astronomical Society},
  404\penalty0 (1):\penalty0 60--85, 2010.

\bibitem[Snoek et~al.(2012)Snoek, Larochelle, and Adams]{snoek2012practical}
Jasper Snoek, Hugo Larochelle, and Ryan~P. Adams.
\newblock {Practical Bayesian Optimization of Machine Learning Algorithms}.
\newblock In \emph{Advances in Neural Information Processing Systems},
  volume~25, 2012.

\bibitem[Suya et~al.(2020)Suya, Chi, Evans, and Tian]{suya2020hybrid}
Fnu Suya, Jianfeng Chi, David Evans, and Yuan Tian.
\newblock {Hybrid Batch Attacks: Finding Black-box Adversarial Examples with
  Limited Queries}.
\newblock In \emph{29th USENIX Security Symposium (USENIX Security 20)}, pages
  1327--1344, 2020.

\bibitem[Todorov et~al.(2012)Todorov, Erez, and Tassa]{todorov2012mujoco}
Emanuel Todorov, Tom Erez, and Yuval Tassa.
\newblock Mujoco: A physics engine for model-based control.
\newblock In \emph{IEEE/RSJ International Conference on Intelligent Robots and
  Systems}, pages 5026--5033, 2012.

\bibitem[Wan et~al.(2021{\natexlab{a}})Wan, Kenlay, Ru, Blaas, Osborne, and
  Dong]{wan2021adversarial}
Xingchen Wan, Henry Kenlay, Robin Ru, Arno Blaas, Michael~A Osborne, and
  Xiaowen Dong.
\newblock {Adversarial Attacks on Graph Classifiers via Bayesian Optimisation}.
\newblock In \emph{Advances in Neural Information Processing Systems},
  volume~34, pages 6983--6996, 2021{\natexlab{a}}.

\bibitem[Wan et~al.(2021{\natexlab{b}})Wan, Nguyen, Ha, Ru, Lu, and
  Osborne]{wan2021think}
Xingchen Wan, Vu~Nguyen, Huong Ha, Binxin Ru, Cong Lu, and Michael~A Osborne.
\newblock Think global and act local: Bayesian optimisation over
  high-dimensional categorical and mixed search spaces.
\newblock In \emph{Proceedings of the 38th International Conference on Machine
  Learning}, pages 10663--10674, 2021{\natexlab{b}}.

\bibitem[Wang et~al.(2020)Wang, Fonseca, and Tian]{wang2020learning}
Linnan Wang, Rodrigo Fonseca, and Yuandong Tian.
\newblock {Learning Search Space Partition for Black-box Optimization using
  Monte Carlo Tree Search}.
\newblock In \emph{Advances in Neural Information Processing Systems},
  volume~33, pages 19511--19522, 2020.

\bibitem[Wang et~al.(2018)Wang, Gehring, Kohli, and Jegelka]{wang2018batched}
Zi~Wang, Clement Gehring, Pushmeet Kohli, and Stefanie Jegelka.
\newblock {Batched Large-scale Bayesian Optimization in High-dimensional
  Spaces}.
\newblock In \emph{Proceedings of the 21st International Conference on
  Artificial Intelligence and Statistics}, pages 745--754, 2018.

\end{thebibliography}
\bibliographystyle{plainnat}
}

\clearpage

\section*{Checklist}


\begin{enumerate}

\item For all authors...
\begin{enumerate}
  \item Do the main claims made in the abstract and introduction accurately reflect the paper's contributions and scope?
    \answerYes{}
  \item Did you describe the limitations of your work?
    \answerYes{The acquisition function used (\cref{sec:mpd}) is only an upper bound of the actual function we like to optimize.}
  \item Did you discuss any potential negative societal impacts of your work?
    \answerYes{See \cref{sec:conclusion}.}
  \item Have you read the ethics review guidelines and ensured that your paper conforms to them?
    \answerYes{}
\end{enumerate}

\item If you are including theoretical results...
\begin{enumerate}
  \item Did you state the full set of assumptions of all theoretical results?
    \answerYes{See \cref{sec:mpd}.}
        \item Did you include complete proofs of all theoretical results?
    \answerYes{See \cref{sec:mpd}.}
\end{enumerate}

\item If you ran experiments...
\begin{enumerate}
  \item Did you include the code, data, and instructions needed to reproduce the main experimental results (either in the supplemental material or as a URL)?
    \answerYes{See the supplemental material.}
  \item Did you specify all the training details (e.g., data splits, hyperparameters, how they were chosen)?
    \answerYes{See \cref{sec:exp}.}
        \item Did you report error bars (e.g., with respect to the random seed after running experiments multiple times)?
    \answerYes{See \cref{sec:exp}.}
        \item Did you include the total amount of compute and the type of resources used (e.g., type of GPUs, internal cluster, or cloud provider)?
    \answerYes{See \cref{sec:exp}.}
\end{enumerate}

\item If you are using existing assets (e.g., code, data, models) or curating/releasing new assets...
\begin{enumerate}
  \item If your work uses existing assets, did you cite the creators?
    \answerYes{See \cref{sec:exp}.}
  \item Did you mention the license of the assets?
    \answerYes{See \cref{sec:exp}.}
  \item Did you include any new assets either in the supplemental material or as a URL?
    \answerYes{See the supplemental material.}
  \item Did you discuss whether and how consent was obtained from people whose data you're using/curating?
    \answerYes{See the supplemental material.}
  \item Did you discuss whether the data you are using/curating contains personally identifiable information or offensive content?
    \answerYes{See the supplemental material.}
\end{enumerate}

\item If you used crowdsourcing or conducted research with human subjects...
\begin{enumerate}
  \item Did you include the full text of instructions given to participants and screenshots, if applicable?
    \answerNA{}
  \item Did you describe any potential participant risks, with links to Institutional Review Board (IRB) approvals, if applicable?
    \answerNA{}
  \item Did you include the estimated hourly wage paid to participants and the total amount spent on participant compensation?
    \answerNA{}
\end{enumerate}

\end{enumerate}


\newpage
\appendix

\section{Hyperparameters}
We follow \citet{muller2021local} and only train our \acro{GP}s on the last $N_{\text{max}}$ data points collected in the \acro{BO} loop (see \S 3.3 of \citet{muller2021local}).
For the synthetic and reinforcement learning objective functions, we use the default settings provided by \citet{muller2021local} (Appendix A.8).
We report the hyperparameters used in the remaining three objectives in \cref{tab:hyper}.

\begin{table}[h]
\centering
\caption{
Hyperparameters and hyperpriors for numerical experiments.
}
\begin{tabular}{cccccc}
\toprule
 & lengthscales & outputscale & noise standard deviation & $M$ & $N_{\text{max}}$ \\
\midrule
Map fitting & $\mathcal{N}(2, 1)$ & $\mathcal{N}(5, 1)$ & $0.01$ & $1$ & $512$ \\
Cosmo.\ constant & $\mathcal{U}(0.05, 20)$ & $\mathcal{U}(0.05, 20)$ & $0.01$ & $1$ & $32$ \\
Rover trajectory & $\mathcal{N}(9, 1)$ & $\mathcal{N}(5, 1)$ & $0.01$ & $1$ & $32$ \\
\bottomrule
\end{tabular}
\label{tab:hyper}
\end{table}

\section{Further discussion on experiment results}
\label{sec:hopper}
As shown in \cref{sec:exp}, our method \acro{MPD} in outperformed by other baselines on the Hopper task, in particular the \acro{GIBO} method of \citet{muller2021local}.
Here, we present the findings of our investigation into this difference in performance.

First, we note that the \acro{GIBO} authors tuned numerous algorithmic hyperparameters on a task-specific basis, including hyper-hyperparameters of hyperpriors, parameters of the ``inner loop'' optimization of \acro{SGD}, etc. (see Tab. 3 in Appx. A.7 of \citet{muller2021local}).
We adopted all these hyperparameters for our experiments, artificially boosting \acro{GIBO}'s performance vs.\ that of \acro{MPD}.
That said, we do believe \acro{MPD}'s performance is fairly robust to most of these hyperparameters.

We point out one hyperparameter of relevance to Hopper alone among the \acro{RL} tasks -- the use of the state normalization scheme described by \citet{muller2021local} in their \S 3.3.
This is enabled only for Hopper, and in fact its use was deemed to be critical success in their ablation study (\S 4.4).
On its face, the scheme is fairly simple -- we normalize states according to a running average/standard deviation of each coordinate visited throughout optimization.
However, we believe this normalization scheme has unexpected and unintended interactions with both the \acro{GIBO} and \acro{MPD} algorithms.
Namely, both algorithms proceed by alternating between two distinct behaviors: an inner ``learning'' policy (lines 6--10 of \cref{alg:mpd}), which evaluates the objective around a point $x$ to learn about the gradient $\nabla f(x)$, and an outer ``moving'' policy (lines 11--14 of \cref{alg:mpd}), which uses this information to progress from $x$ to the next point in the domain to evaluate, $x \rightarrow x'$.
Here (and in the \acro{GIBO} codebase), state normalization was turned on for both learning and moving.

\begin{figure}[t]
\centering
\subfloat[with 5 samples]      {
\includegraphics[width=.5\linewidth]{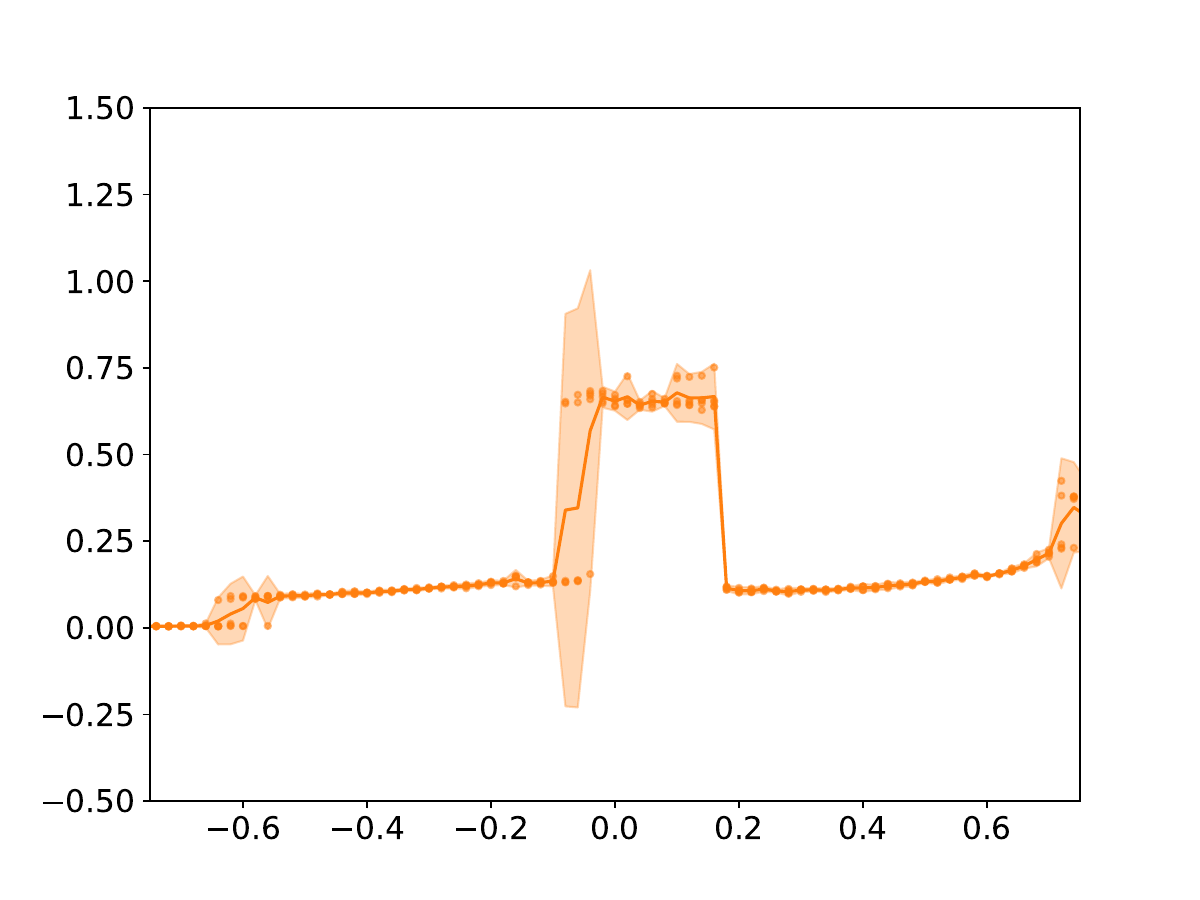}
}
\subfloat[with 20 samples]      {
\includegraphics[width=.5\linewidth]{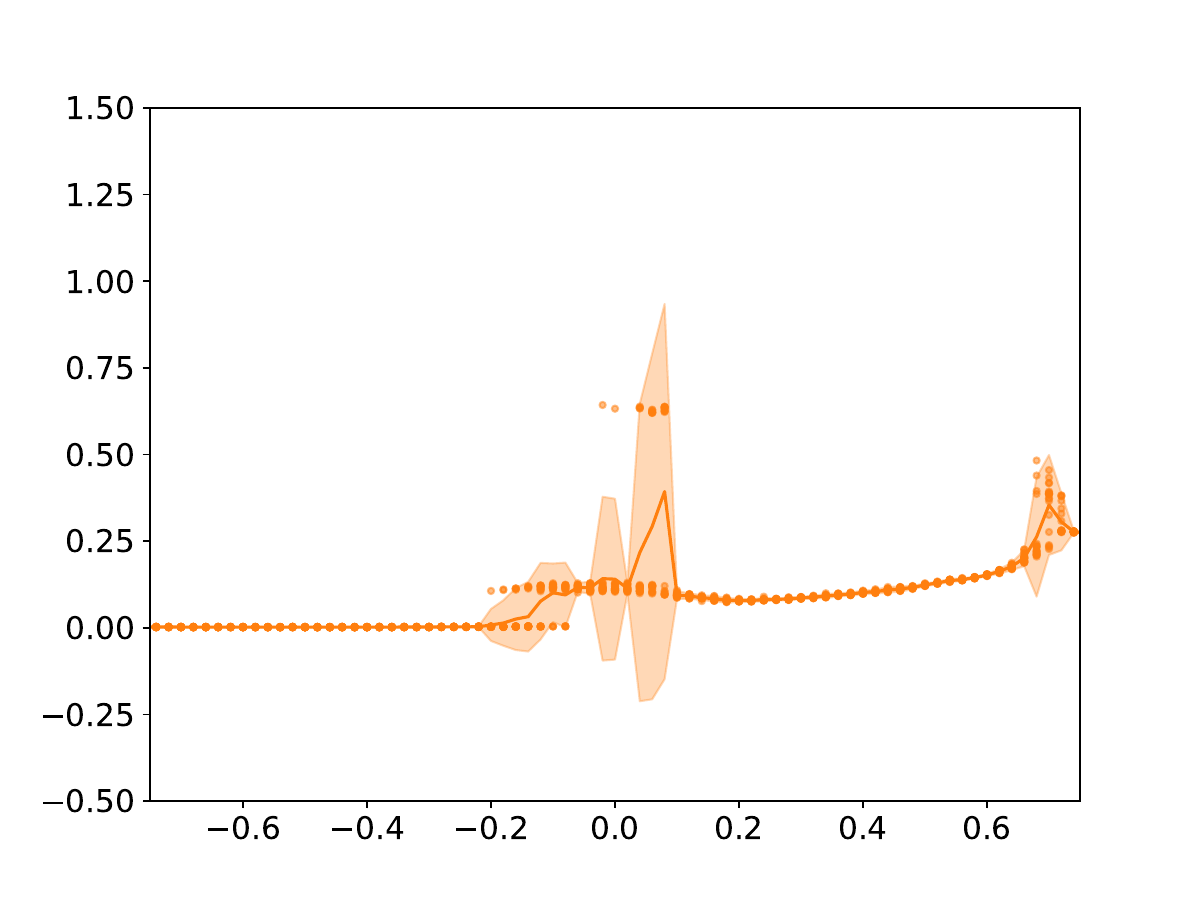}
}
\\
\caption{
The values returned by the Hopper-v1 objective function with \citet{muller2021local}'s state normalization scheme turned on along a grid spanning a 1d linear subspace of the Hopper domain. Each point is evaluated 5 times on the \textbf{left} and 20 times on the \textbf{right}.
The \textcolor{orange}{\textbf{orange}} dots show the values observed from the objective function, and the lines and shaded regions show the empirical mean and standard deviation.
With the state normalization scheme, the trends observed in the objective function differ dramatically simply due to evaluating a different number of times.
}
\label{fig:slice}
\end{figure}
We argue that the running averages used for state normalization should only be based on the trajectory of the outer loop rather than additionally on the exploration we make along the way in the inner loop.
Systematic differences in the behavior of the learning policy (say, generally sampling closer or farther away from $x$) will lead to systematic differences in the state normalization behavior.
This will even affect the function values observed by outer loop policy even if the exact same trajectory is followed.
\cref{fig:slice} illustrates this behavior in a very simple setting.
We show a sequence of Hopper function evaluations along a grid spanning a 1d linear subspace of the Hopper domain.
Think of moving from point to point the grid as the trajectory of the ``moving'' policy.
The only difference between these plots is that the function is evaluated 5 times at each location on the left and 20 times at each location on the right (think of re-sampling as the ``learning'' policy).
The distribution of function values is dramatically different along the trajectory due to systematic (in this case, trivial) differences in the learning policy.

\begin{table}[t]
\centering
\caption{
Average terminal optimized rewards on Hopper-v1 for different versions of \acro{GIBO} and \acro{MPD}.
}
\begin{tabular}{ccc}
\toprule
 & \acro{GIBO} & \acro{MPD} \\
\midrule
original (as reported in \cref{sec:exp}) & $2827.96 ~ (273.02)$ & $2199.48 ~ (337.44)$ \\
no updates to the state norm.\ constants & $1032.97 ~ (375.51)$ & $1398.35 ~ (307.29)$ \\
fixed state norm.\ from initial random exploration & $2100.63 ~ (405.11)$ & $2086.74 ~ (315.01)$ \\
without state norm.\ & $415.56 ~ (64.82)$ & $381.50 ~ (78.92)$ \\
\bottomrule
\end{tabular}
\label{tab:hopper}
\end{table}

We aim to control for the effect of state normalization in our comparison between \acro{GIBO} and \acro{MPD}.
A simple way to do this is to disable updates to the state normalization constants in the inner ``learning'' loop (evaluating the objective around the current location to learn about the gradient).
That is, we only allow state normalization to be defined by the trajectory generated by the outer loop, where we move to new locations.
We reran both algorithms and report the results in the second row of \cref{tab:hopper}, where we see a drop from the original performance with state normalization updates always enabled (first row).
We also observe statistically comparable performance between the two algorithms.

As another way of comparing \acro{GIBO} and \acro{MPD} while keeping the effect of state normalization fixed, we adopt the common practice in reinforcement learning (\acro{RL}) in which we run an initial exploration phase (of 1000 random queries to the objective) to initialize the state normalization constants, and then run each algorithm with those constants fixed.
This way, \acro{GIBO} and \acro{MPD} will always share the same state normalization scheme. The results of this setup are in the third row of \cref{tab:hopper}, where we again observe statistically comparable performance.
Given the apparent interaction between using state normalization and other hyperparameters considered, it’ is plausible that the performance of both methods in the third row could be improved substantially by further tuning and engineering.
Finally, we see that just as \citet{muller2021local} reported in the ablation study in their \S 4.4, the performance of \acro{MPD} suffers without state normalization (last row of \cref{tab:hopper}).
Once again, the two algorithms are statistically comparable.

Overall, we highlight the complex role state normalization plays in the Hopper experiments.
While it is not clear why this state normalization scheme works so well for \acro{GIBO} on this particular problem, we note that it only does with the various tuned hyperparameters that are specific to Hopper (for example, state normalization was not used in other \acro{RL} problems). 
All of this engineering that has a significant impact on the final performance of these methods on Hopper suggests a significant degree of brittleness that is (a) undesirable in practice, and (b) not seen on even the other \acro{RL} tasks we consider.
This suggests that neither \acro{GIBO} nor \acro{MPD} may be the most robust optimization routine to use on Hopper specifically.

\section{Code and licenses}
We use GPyTorch and BoTorch to extend \acro{GIBO}, which is under the \acro{MIT} license, and implement \acro{MPD}.
Implementation of objective functions used is curated from authors of respective publications, as stated in \cref{sec:exp}.
No identifiable information or offensive content is included in the data.
Code implementation is included in the supplemental material and will be released under the \acro{MIT} license.

\end{document}